\newcounter{theo} 
\newcounter{lemm} 
\title{Interpretation of the Transformer \\ and Improvement of the Extractor}
\date{}
\newif\ifuniqueAffiliation
\author{ {\hspace{1mm}Zhe~Chen} \\
	School of Computer Science and Engineering\\
	Northeastern University \\
          Shenyang, Liaoning, China\\
	\texttt{ml\_iot@163.com; chenzhe@mail.neu.edu.cn} \\
}
\newbox{\orcid}\sbox{\orcid}{\includegraphics[scale=0.06]{orcid.pdf}} 
\begin{document}
\maketitle

\begin{abstract}

It has been over six years since the Transformer architecture was put forward. Surprisingly, the vanilla Transformer architecture is still widely used today. One reason is that the lack of deep understanding and comprehensive interpretation of the Transformer architecture makes it more challenging to improve the Transformer architecture. In this paper, we first interpret the Transformer architecture comprehensively in plain words based on our understanding and experiences. The interpretations are further proved and verified. These interpretations also cover the Extractor, a family of drop-in replacements for the multi-head self-attention in the Transformer architecture. Then, we propose an improvement on a type of the Extractor that outperforms the self-attention, without introducing additional trainable parameters. Experimental results demonstrate that the improved Extractor performs even better, showing a way to improve the Transformer architecture.

\end{abstract}

\keywords{Transformer \and Interpretation \and Extractor}

\section{Introduction}

The Transformer architecture, since it launched in 2017~\citep{van17}, has been widely adopted in natural language processing, computer vision, and many other areas. Nowadays, various large language models (LLMs) are trained using the Transformer architecture.

However, the lack of deep understanding and comprehensive interpretation hinders further improvements on the Transformer architecture. 
Moreover, the lack of interpretability also limits applications in real-world~\citep{YangHZZDC23}.
As a result, the vanilla Transformer architecture is still widely used today, over six years after its debut, even in the age of AI (artificial intelligence) rush. One reason is that interpreting the Transformer is a low-level, complex, demanding, time-consuming, and non-profitable task that fewer people would like to take.

In~\cite{chen2023attention}, a family of Extractors is proposed to replace the multi-head self-attention in the Transformer in a drop-in fashion. Specifically, a type of the Extractor called the higher-performance Extractor (HE) is capable of outperforming the multi-head self-attention with fewer arithmetic operations and the same number of trainable parameters. And a more powerful type of the Extractor called the super high-performance Extractor (SHE) achieves a much better performance than the multi-head self-attention.

In this paper, we first interpret the Transformer architecture, as well as what the self-attention and the Extractor actually do, based on our understanding and experiences. These interpretations are further proved and verified. Then, we propose an improvement on the SHE. Experimental results demonstrate that with the improvement the SHE can achieve a better performance. To our best knowledge, this is the first time that the Transformer architecture is comprehensively interpreted in plain words.

Our contributions are summarized as follows.
\begin{itemize}
  \item We comprehensively interpret the Transformer architecture (as well as what the self-attention and the Extractor actually do) in plain words.
  \item We prove and further verify the interpretations.
  \item We propose an improvement on the SHE, which is a high-performance replacement of the multi-head self-attention in the Transformer.
  \item We evaluate the performance of the Transformers equipped with the improved SHE in text generation.
\end{itemize}

\section{Interpretation of the Transformer}

\begin{figure}[!t]
  \centering
  \includegraphics[width=0.7\columnwidth]{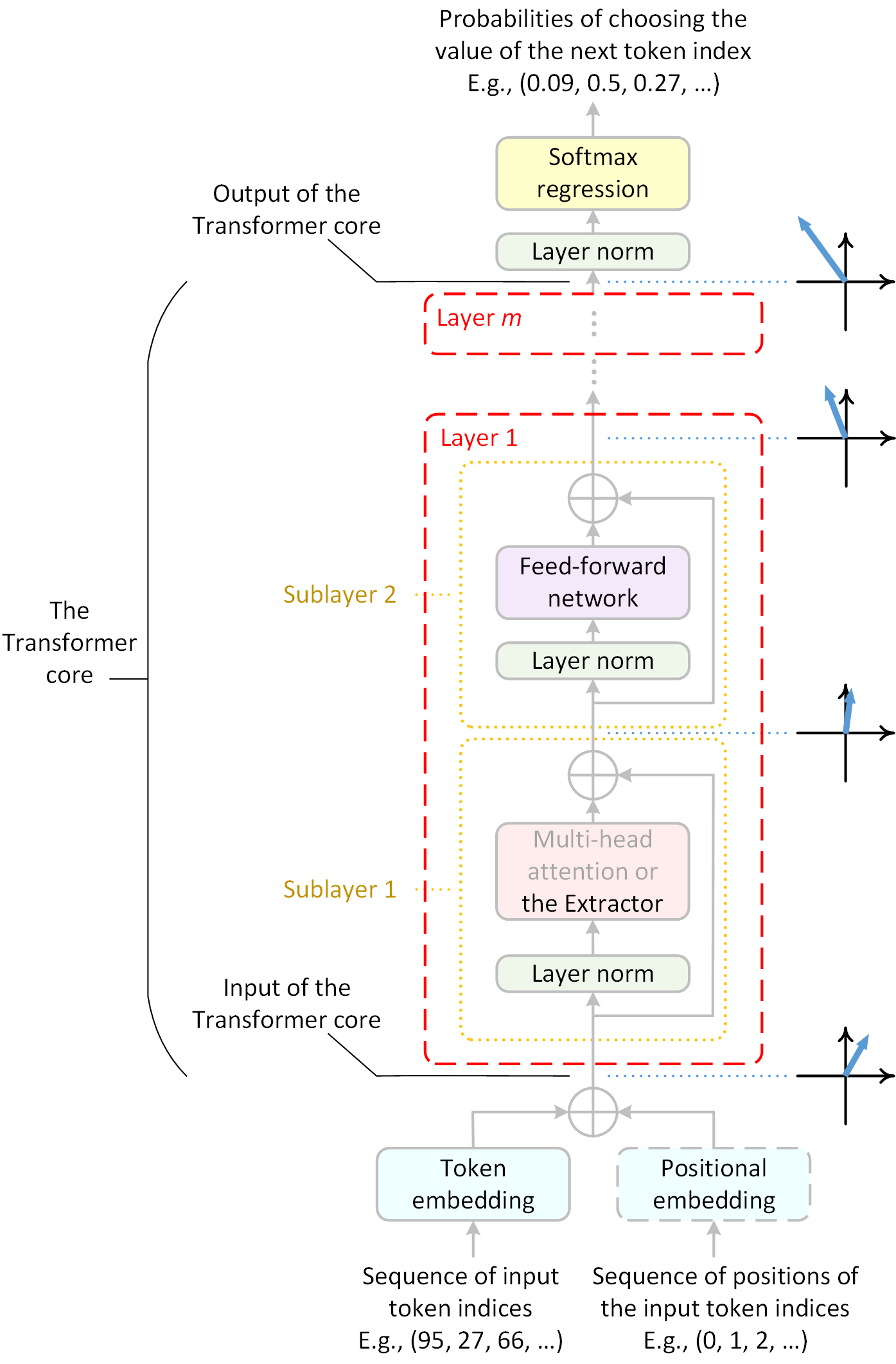} 
  \caption{Illustration of the Transformer.}
  \label{fig_trans}
\end{figure}

In this paper, we use text generation as an example task for the Transformer. As discussed in~\cite{chen2023attention}, the Transformer architecture is employed to build models to predict or infer the probabilities of the next token given a variable-length sequence of tokens. 

Alternatively, we can view those variable-length sequences as fixed-length sequences if we introduce a padding token whose embedding is always a zero vector. That is to say, we can virtually add padding tokens at the beginning of a variable-length sequence to make its length equal to a given number $l$, where $l$ is the maximum sequence length that a model supports and the length of the context window in text generation. Since the embeddings of padding tokens are zero vectors, padding tokens have no effect on the output of the Transformer. In this way, we convert the “variable-length” problem into a “fixed-length” problem, meaning that we can regard all the input sequences to the Transformer as fixed-length sequences. 

In text generation, the outputs of the Transformer are probabilities for choosing the next token. Since the last component of the Transformer is virtually a softmax regression~\citep{chen2023attention}, these probabilities come from the input vector of the softmax regression, whereas the input vector of the softmax regression comes from the output of the last Transformer layer. The number of elements in these vectors is $d$, where $d$ is the internal “dimension” of the Transformer. 
To facilitate discussions, we introduce the term ``Transformer core'' to refer to the stack of $m$ Transformer layers, as illustrated in Fig.~\ref{fig_trans}. In the following discussions, we will focus on the Transformer core, for the interpretations of both softmax regression and embedding are pretty clear.

\begin{lemma}{}{}
The Transformer core implements a matrix function $f:\ \mathbb{R}^{t \times d}\rightarrow\mathbb{R}^{t \times d}$, where $t$ is the length of the input sequence.
\end{lemma}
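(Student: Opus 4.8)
The plan is to argue by a simple structural decomposition of the Transformer core, establishing that a single Transformer layer is a function $\mathbb{R}^{t \times d} \to \mathbb{R}^{t \times d}$ and then closing under composition. First I would fix the input to the core as a matrix $X \in \mathbb{R}^{t \times d}$, where each of the $t$ rows is the $d$-dimensional (embedding plus positional) representation of one token of the sequence. For a given sequence length $t$ this pins down a single domain, so that the word ``function'' is well-posed; the padding convention described above is what guarantees we may regard all inputs of a model as sharing this shape rather than forming a family indexed by length.

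Next I would split one layer into its constituent sublayers and verify that each preserves the shape $t \times d$. For the multi-head self-attention sublayer, the query, key and value projections are linear maps, so per head $Q = XW_Q$, $K = XW_K$, $V = XW_V$ lie in $\mathbb{R}^{t \times d_k}$; the attention weights $\mathrm{softmax}\!\left(QK^{\top}/\sqrt{d_k}\right)$ form a $t \times t$ matrix, and multiplying it by $V$ returns a $t \times d_k$ matrix, after which concatenating the heads and applying the output projection $W_O \in \mathbb{R}^{d \times d}$ yields a matrix in $\mathbb{R}^{t \times d}$. The position-wise feed-forward sublayer applies one fixed map $\mathbb{R}^{d} \to \mathbb{R}^{d}$ to every row, hence sends $\mathbb{R}^{t \times d}$ to $\mathbb{R}^{t \times d}$; the residual connections add two matrices of identical shape; and layer normalization acts row-wise and leaves the shape unchanged. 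Consequently each sublayer, and therefore the whole layer, is a function $\mathbb{R}^{t \times d} \to \mathbb{R}^{t \times d}$.

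Finally, since a function whose domain equals its codomain is closed under composition, stacking $m$ such layers gives $f = f_m \circ \cdots \circ f_1$, still a map $\mathbb{R}^{t \times d} \to \mathbb{R}^{t \times d}$, which is the claim. Because the Extractor is defined as a drop-in replacement for the multi-head self-attention, it inherits the same input/output signature $\mathbb{R}^{t \times d} \to \mathbb{R}^{t \times d}$, so the argument is unaffected when the self-attention sublayer is swapped for any member of the Extractor family.

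I do not expect a deep obstacle here, as the statement is essentially a shape-tracking bookkeeping result. The one place requiring care is the attention sublayer, where the intermediate attention matrix is $t \times t$ rather than $t \times d$; the point to make explicit is that the subsequent multiplication by $V$ and the output projection restore the $t \times d$ shape. The secondary subtlety worth addressing is the well-definedness issue noted above, namely why the padding convention licenses treating the core as a single function with a fixed domain.
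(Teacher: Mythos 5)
Your proof is correct, but it takes a genuinely different and more fine-grained route than the paper. The paper proves this lemma purely at the interface level: the domain is $\mathbb{R}^{t \times d}$ because the embedding layer maps the $t$ token indices to a $t \times d$ matrix, and the codomain is $\mathbb{R}^{t \times d}$ because the core emits a $t \times d$ matrix (of which only the last row is consumed by the softmax regression at inference). The internal structure of the core is deliberately left opaque; the layer-wise signature and the composition $f = f^{[m]} \circ \cdots \circ f^{[1]}$ are deferred to the paper's Lemma 2 and Proposition 1, respectively. You instead fold all three statements into one argument, and you go further than the paper does anywhere: you verify shape preservation inside each sublayer, tracking $Q, K, V \in \mathbb{R}^{t \times d_k}$, the intermediate $t \times t$ attention matrix, the row-wise FFN, the residual additions, and layer normalization. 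This buys you a self-contained proof grounded in the architecture's primitives, and it explicitly handles the one real subtlety (the $t \times t$ attention matrix being restored to $t \times d$ by the multiplication with $V$ and the output projection, implicitly assuming the concatenated head width $h d_k$ equals $d$) that the paper's proofs never spell out. What the paper's modular version buys is economy: each statement stays atomic, and the sublayer-level reasoning is reused later for Propositions 2--6 rather than being embedded here. Your attention to well-definedness via the padding convention is also a nice touch the paper handles only informally in the surrounding prose; note, though, that since the lemma fixes $t$ as the input length, the padding discussion is strictly optional for this statement.
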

\begin{proof}
The Transformer inputs a sequence of indices of $t$ tokens. The embedding layer maps the input sequence into a $t \times d$ matrix, which is the input of the Transformer core. Thus, the domain of the function is $\mathbb{R}^{t \times d}$. On the other hand, the Transformer core actually outputs a $t \times d$ matrix, although in the phase of inference only the last row of this matrix is used by the succedent softmax regression. Hence, the codomain of the function is also $\mathbb{R}^{t \times d}$.
\end{proof}

\begin{lemma}{}{}
 Each layer (e.g., the $k$-th layer) in the Transformer core implements a matrix function $f^{[k]}:\ \mathbb{R}^{t \times d}\rightarrow\mathbb{R}^{t \times d}$, where $k=1,2,\cdots,m$ and $m$ is the number of layers in the Transformer core.
\end{lemma}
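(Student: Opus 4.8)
The plan is to exploit the compositional structure of the Transformer core together with the preceding lemma. By definition the core is a stack of $m$ layers applied in sequence, so the core function $f$ factors as
\[
f = f^{[m]} \circ f^{[m-1]} \circ \cdots \circ f^{[1]}.
\]
The input of the first layer is exactly the input of the core, which by the preceding lemma lives in $\mathbb{R}^{t\times d}$; hence the domain of $f^{[1]}$ is $\mathbb{R}^{t\times d}$. I would then argue by induction on $k$ that each layer both receives and returns a $t\times d$ matrix, so that the composition is well-defined and every $f^{[k]}$ has domain and codomain $\mathbb{R}^{t\times d}$.

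The substance of the argument is the inductive step, which rests on the internal anatomy of a single Transformer layer. Each layer consists of two sub-layers, a multi-head self-attention block and a position-wise feed-forward block, each wrapped by a residual connection and a layer-normalization operation. I would verify that these three ingredients preserve shape: (i) the residual connection adds a sub-layer's output to its own input, an operation that is only defined when the two share the shape $t\times d$ and that therefore returns a $t\times d$ matrix; (ii) layer normalization acts independently on each of the $t$ rows and leaves the shape unchanged; and (iii) both the self-attention map and the feed-forward map are engineered to send a $t\times d$ matrix to a $t\times d$ matrix, since the attention output has $t$ rows and projection width $d$, while the feed-forward network is applied row-wise with output width $d$. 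Composing the two shape-preserving sub-layers shows $f^{[k]}:\mathbb{R}^{t\times d}\to\mathbb{R}^{t\times d}$, which also supplies a $t\times d$ input to layer $k+1$ and closes the induction.

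The main obstacle, such as it is, is not a computation but making precise the claim that self-attention and the feed-forward block are shape-preserving without re-deriving their definitions. I would handle this by appealing to the standard layer specification and by noting that the residual connection alone already forces the common shape $t\times d$ on every sub-layer, so the conclusion is dictated by the architecture regardless of the internal formulas. As a consistency check, the composition of the $m$ shape-preserving layers reproduces the map $f:\mathbb{R}^{t\times d}\to\mathbb{R}^{t\times d}$ of the preceding lemma, confirming that the per-layer statement is compatible with the whole-core statement.
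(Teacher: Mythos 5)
Your proposal is correct, but it derives what the paper merely asserts: the paper's proof of this lemma is three sentences long, stating that the $m$ layers are serially connected and architecturally identical, that each inputs and outputs a $t \times d$ matrix, hence every $f^{[k]}$ has domain and codomain $\mathbb{R}^{t \times d}$, with possibly different parameter values yielding possibly different functions. You instead prove the shape preservation by induction on $k$, verifying the inductive step at the sublayer level (residual additions, layer normalization, attention, and the FFN all preserve the $t \times d$ shape). Your sharpest observation --- that the residual connection alone forces each sublayer's output to share its input's shape, independently of the internal formulas --- goes beyond the paper and has the side benefit of covering the case the paper explicitly allows in which sublayer 1 is an Extractor rather than self-attention, a case your write-up otherwise omits by naming only multi-head attention. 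Two small cautions: (i) you open by invoking the factorization $f = f^{[m]} \circ f^{[m-1]} \circ \cdots \circ f^{[1]}$, but in the paper that is Proposition 1, proved \emph{after} and \emph{from} this lemma, so to avoid circularity you should appeal only to the serial connectivity of the layers --- which is all you actually use, since the well-definedness of the composition is precisely what your induction establishes, as you yourself note; (ii) the paper deliberately sets layer normalization aside in this section, so including it in your layer anatomy is harmless (it is row-wise and shape-preserving) but unnecessary for the lemma. Net, each approach buys something: the paper's proof matches its informal, architecture-level register, while yours makes explicit why the shape constraint is structurally inevitable rather than a design convention.
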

\begin{proof}
There are $m$ serially-connected layers in the Transformer core. These layers are identical in terms of architecture, although the values of their parameters may be different. They all input $t \times d$ matrices and output $t \times d$ matrices, meaning that both their domains and their codomains are $\mathbb{R}^{t \times d}$. As the values of their parameters may be different, they may implement different functions.
\end{proof}

\begin{proposition}{}{}
The Transformer core implements a composite function $f=f^{[m]} \circ f^{[m-1]} \circ \cdots \circ f^{[1]}$.
\end{proposition}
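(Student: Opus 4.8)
The plan is to argue directly from the serial connection of the layers together with the two preceding lemmas, so the argument is essentially bookkeeping about function composition rather than genuine computation. First I would fix an arbitrary input matrix $X \in \mathbb{R}^{t \times d}$ to the core and trace it through the stack. By the definition of the Transformer core, $X$ is fed into the first layer, whose output is $f^{[1]}(X) \in \mathbb{R}^{t \times d}$ by Lemma 2. Because the layers are serially connected, this output is precisely the input to the second layer, so the second layer produces $f^{[2]}(f^{[1]}(X))$, and so on. Iterating for all $m$ layers yields $f^{[m]}(f^{[m-1]}(\cdots f^{[1]}(X)\cdots))$ as the output of the last layer, which by definition of the core is the output of the core on $X$.

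Next I would verify that every composition in this chain is well-defined, and this is exactly where the earlier lemmas do the work. Lemma 2 guarantees that for each $k$ the map $f^{[k]}$ has both domain and codomain equal to $\mathbb{R}^{t \times d}$. Consequently the codomain of $f^{[k]}$ coincides with the domain of $f^{[k+1]}$ for every $k = 1, 2, \ldots, m-1$, so the composite $f^{[m]} \circ f^{[m-1]} \circ \cdots \circ f^{[1]}$ is a legitimate function from $\mathbb{R}^{t \times d}$ to $\mathbb{R}^{t \times d}$. Since by Lemma 1 the core already implements some function $f$ with this same domain and codomain, and since the pointwise trace above shows that $f(X)$ agrees with the iterated application for every $X$, I would conclude that $f$ equals the composite, which establishes the claim.

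The only point requiring care — and the closest thing to an obstacle — is making the informal phrase \emph{serially connected} precise: I must assert that the output matrix of layer $k$ is literally the input matrix of layer $k+1$ with no intervening transformation, and that the input and output of the core are the input of the first layer and the output of the last layer, respectively. This is a structural fact about how the layers are wired together, not something to be derived, so I would simply state it as the definition of the core and let the composition identity follow immediately. No nontrivial calculation is involved; the result is a purely structural consequence of the architecture and the two lemmas.
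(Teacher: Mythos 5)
Your proposal is correct and follows essentially the same route as the paper's own proof, which likewise rests on the serial connection of the layers (output of layer $k$ being the input of layer $k+1$) together with the fact from Lemma 2 that all the $f^{[k]}$ share domain and codomain $\mathbb{R}^{t \times d}$. Your version merely spells out the pointwise trace of an arbitrary input $X$ through the stack, which the paper leaves implicit.
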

\begin{proof}
All the layers in the Transformer core are serially connected, meaning that the output of the $k$-th layer is the input of the $(k+1)$-th layer, where $k=1,2,\cdots,m-1$. And all the domains and codomains of $f^{[1]}, f^{[2]}, \cdots, f^{[m]}$ are the same.
\end{proof}

As shown in Fig.~\ref{fig_trans}, a standard Transformer layer consists of two serially-connected sublayers: the self-attention sublayer or the Extractor sublayer as the first sublayer (sublayer 1) and the feed-forward network (FFN) sublayer as the second sublayer (sublayer 2). Please note that the order of the two sublayers may alter and a Transformer layer may consist of only one sublayer. Besides, it is also possible for the two sublayers to be connected in parallel, as proposed in~\cite{chowdhery23} and~\cite{zhong2022neural}. In order to keep it consistent with the vanilla Transformer, in this paper we refer the self-attention sublayer or the Extractor sublayer to sublayer 1 and refer the FFN sublayer to sublayer 2.

\begin{proposition}{}{}
Sublayer 1 performs dimensionality reduction.
\end{proposition}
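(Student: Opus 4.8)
The plan is to make ``dimensionality reduction'' precise as an internal-rank statement and then to exhibit the step inside sublayer 1 where the effective dimension of the representation drops. Since sublayer 1 maps $\mathbb{R}^{t\times d}$ into $\mathbb{R}^{t\times d}$ (the shape is preserved across the layer by the preceding lemma), the reduction cannot be a change in ambient shape; it must be a drop in the rank, or effective dimension, of the information that passes through the sublayer.

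First I would write multi-head self-attention explicitly. With input $X\in\mathbb{R}^{t\times d}$, head $h$ forms values $V^{(h)}=XW_V^{(h)}$ with $W_V^{(h)}\in\mathbb{R}^{d\times d_v}$ and $d_v=d/H<d$, and head output $Z^{(h)}=A^{(h)}V^{(h)}$ with $A^{(h)}$ row-stochastic. The key step is that every row of $Z^{(h)}$ is a convex combination of the rows of $V^{(h)}$, so $\operatorname{rank}(Z^{(h)})\le\operatorname{rank}(V^{(h)})\le\min(t,d_v)$. Thus the value projection squeezes each $d$-dimensional token representation into a $d_v$-dimensional subspace, and the attention averaging cannot raise the rank above $d_v$: this is the dimensionality reduction carried out by each head. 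I would make the same observation on the query/key side, where the scores factor through $W_Q W_K^{\top}$ of rank at most $d_k<d$, so the similarity computation is likewise performed after a reduction.

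Second I would argue the identical bottleneck for the Extractor. Since every member of the Extractor family produces its output as weighted aggregations of vectors obtained from $X$ through a rank-limited, lower-dimensional projection, the same inequality $\operatorname{rank}(\text{output unit})\le\min(t,d_v)<d$ holds unit-by-unit, which makes the claim uniform across self-attention and the Extractor rather than something to be re-derived mechanism by mechanism.

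The main obstacle, as I see it, is conceptual rather than computational. Because the full sublayer preserves the shape $t\times d$ and, after concatenating all $H$ heads and applying $W_O$, can in principle restore the rank up to $d$, I must argue that ``dimensionality reduction'' is to be read at the level of the individual head or aggregation unit, each of which is forced through a $d_v$-dimensional bottleneck, rather than at the level of the whole sublayer. Pinning down a single clean invariant, namely that each aggregation unit's output lies in a subspace of dimension at most $\min(t,d_v)<d$, that is simultaneously true, non-trivial, and uniform over all the mechanisms is the crux; the edge case $t<d_v$, where the bound is instead governed by $t$, also has to be folded into the same statement.
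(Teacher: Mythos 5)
Your proof proves a different statement from the one the paper intends, and the invariant you pin down is not actually uniform across the mechanisms the proposition covers. The paper's notion of ``dimensionality reduction'' is far more elementary than a rank bottleneck: because of causal masking, the $i$-th output row of sublayer 1 is a function of rows $1,2,\cdots,i$ of the input, i.e., of an $id$-vector (or, after virtually padding the sequence to length $l$, an $ld$-vector in the Extractor case), while the output row itself is a $d$-vector. The reduction is from $id$ (or $ld$) input coordinates down to $d$ output coordinates --- a many-to-few encoding, as the paper's follow-up remark (an output $d$-vector is a ``code'' corresponding to an input $id$-vector or $ld$-vector) makes explicit. Nothing about ranks, value projections, or head widths enters the paper's argument; the only structural fact used is which input rows each output row depends on. Your reading, by contrast, locates the reduction in the ambient width of per-head representations, which you yourself note cannot survive at the sublayer level after concatenation and $\boldsymbol{W}_O$.

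More seriously, your rank-bottleneck invariant fails on two of the cases the proposition must cover. First, the Extractor: in this paper's own Eq.~(\ref{eq_out_ext}), the SHE computes $\boldsymbol{x}_i^{[\mathrm{out\_ext}]}=\frac{1}{\sqrt{i}}\sum_{j=1}^i \boldsymbol{x}_j^{[\mathrm{in\_sub1}]}\boldsymbol{W}_{i-j+1}^{[\mathrm{ext}]}$ with $\boldsymbol{W}_j^{[\mathrm{ext}]}\in\mathbb{R}^{d\times d}$ --- full square weight matrices, not rank-limited projections --- so there is no $d_v<d$ bottleneck to point to, and your claim that the bound holds uniformly ``rather than something to be re-derived mechanism by mechanism'' is false for exactly the mechanism this paper is about. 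Second, single-head self-attention (which the paper explicitly trains in its validation experiment) has $d_v=d$, so $\min(t,d_v)<d$ fails whenever $t\ge d$. So even granting your reinterpretation of the statement, the argument does not go through for the objects the proposition quantifies over; the paper's dependency-counting argument is the one that applies to all of them, and it is essentially a bookkeeping observation rather than a linear-algebraic one.
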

\begin{proof}
The input of sublayer 1 is the input of a Transformer layer, which is a $t \times d$ matrix. The output of sublayer 1 is also a $t \times d$ matrix. The $i$-th row of the output matrix are computed using the $1$-st, $2$-nd , $\cdots$, and $i$-th rows of the input matrix, where $i=1,2,\cdots,t$. Those rows of the input matrix can be regarded as an $id$ -vector. And the $i$-th row of the output matrix can be regarded as a $d$-vector. In this sense, sublayer 1 reduces $id$-vectors to $d$-vectors. This is what the self-attention sublayer does.
As we discussed earlier, with padding tokens we can virtually extend an input $i \times d$ matrix to an $l \times d$ matrix, where $i=1,2,\cdots,t$ and $t \le l$. In this sense, sublayer 1 reduces $ld$-vectors to $d$-vectors. This is what the Extractor does. In either case, sublayer 1 reduces dimensionalities.
\end{proof}

Alternately, from the aspect of encoding, the aforementioned dimensionality reduction can be viewed as an encoding process, for it converts $id$-vectors or $ld$-vectors into $d$-vectors. In this sense, an output $d$-vector is a ``code'' that ideally corresponds to an input $id$-vector or $ld$-vector. Both the self-attention and the Extractor can be regarded as encoders. 

\begin{proposition}{}{}
Sublayer 2 performs transformation.
\end{proposition}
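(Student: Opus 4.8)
The plan is to identify sublayer 2 with the feed-forward network (FFN) sublayer and then argue, in parallel with the preceding proposition on sublayer 1, that it is a dimensionality-preserving, position-wise map. First I would record that sublayer 2 inherits the generic sublayer signature: its input is the $t \times d$ output of sublayer 1, and its output is again a $t \times d$ matrix, so it realizes a function $\mathbb{R}^{t \times d} \to \mathbb{R}^{t \times d}$. The decisive structural feature to isolate next is that the FFN is applied to each row independently: unlike sublayer 1, where the $i$-th output row is computed from rows $1, 2, \dots, i$ of the input, here the $i$-th output row depends only on the $i$-th input row.

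With this in hand, the core step is to zoom in on a single row. Each row of the input is a $d$-vector, and the FFN sends it through an affine map, a pointwise nonlinearity, and a second affine map, producing another $d$-vector; hence, restricted to one position, it is a function $\mathbb{R}^d \to \mathbb{R}^d$. Because the input and output dimensionalities coincide, no reduction occurs --- in contrast with sublayer 1, which collapses an $id$-vector (or, after padding, an $ld$-vector) down to a single $d$-vector by mixing rows. I would phrase the conclusion as follows: sublayer 2 maps each $d$-vector to a new $d$-vector of the same dimensionality, so it performs a transformation rather than a reduction.

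The main obstacle I anticipate is conceptual rather than computational: I must pin down the word ``transformation'' precisely enough to cleanly separate it from ``dimensionality reduction,'' and then confirm that the characterization is robust to the auxiliary operations typically wrapped around the FFN. In particular, I would check that the inflated hidden width (commonly $4d$) used inside the FFN is irrelevant, since only the matching input and output dimensions $d$ matter, and that the residual connection and normalization surrounding sublayer 2 likewise act position-wise and preserve $d$, so they do not disturb the claim.
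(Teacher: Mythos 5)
Your proposal is correct and follows essentially the same route as the paper: both arguments rest on the two observations that sublayer 2 maps a $t \times d$ matrix to a $t \times d$ matrix and that the $i$-th output row depends only on the $i$-th input row, so the FFN transforms each $d$-vector into another $d$-vector rather than reducing dimensionality. Your additional checks (the irrelevance of the hidden width and the position-wise nature of the residual connection) go beyond the paper's proof but do not change the argument; note only that the paper deliberately defers the residual connection to a separate proposition rather than folding it into this one.
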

\begin{proof}
The input and output of sublayer 2 are both $t \times d$ matrices. The $i$-th row of its output matrix is computed only using the $i$-th row of its input matrix, where $i=1,2,\cdots,t$. And each row of the input and output matrices is a $d$-vector. In this sense, sublayer 2 transforms $d$-vectors to $d$-vectors.
\end{proof}

Alternately, from the aspect of mapping, sublayer 2 maps each row of its input matrix, or ``code'', into a corresponding output row vector.

As shown in Fig.~\ref{fig_trans}, there is a residual connection in each sublayer. In practice, both pre-layer normalizations and dropouts are commonly applied within the residual connections. Since pre-layer normalization is a feature scaling technique in nature and dropout is a regularization technique and they are not quite relevant to the interpretation of the Transformer architecture, they will not be considered in this section, in order to keep the discussions simpler.

\begin{proposition}{}{}
The residual connections in sublayers contribute to expediting the training of Transformer models.
\end{proposition}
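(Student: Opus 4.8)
The plan is to argue through the gradient-flow mechanism that residual connections prevent the back-propagated signal from vanishing across depth, and that this directly accelerates gradient-based training. First I would formalize a sublayer equipped with a residual connection. Writing $g$ for the function computed by the sublayer body (the self-attention or Extractor in sublayer 1, or the FFN in sublayer 2) and $x$ for its $t\times d$ input, the residual connection makes the sublayer output $y = x + g(x)$. Differentiating (after vectorization), the Jacobian of the sublayer map is $\partial y/\partial x = I + \partial g/\partial x$, that is, the identity plus the Jacobian of the body $g$. This is the single structural fact the whole argument rests on.

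Second, I would propagate this through the composition $f = f^{[m]}\circ f^{[m-1]}\circ\cdots\circ f^{[1]}$ established above. By the chain rule, the gradient of the loss with respect to the parameters of an early layer is a product of the per-layer Jacobians of all subsequent layers. Without residual connections each factor is $\partial g/\partial x$, whose singular values may be uniformly bounded by some $\rho<1$, so the product decays like $O(\rho^m)$ in the depth $m$ — the vanishing-gradient phenomenon that stalls the training of deep stacks. With residual connections each factor becomes $I + \partial g/\partial x$, and expanding the product isolates an identity path (the product of the identity terms) that survives independently of $m$, plus higher-order correction terms. The identity path guarantees a $\Theta(1)$ lower bound on the signal reaching the early layers, no matter how large $m$ is.

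Third, I would convert ``a non-vanishing gradient reaches every layer'' into ``training is expedited'' by the standard observation that the progress of a gradient-descent step is governed by the magnitude of the gradient: when the early-layer gradients do not collapse, those layers receive informative updates from the first iterations rather than idling in a near-stationary regime, so the model leaves that regime sooner and converges in fewer steps. I would buttress this by contrasting the $O(\rho^m)$ decay of the plain stack with the depth-independent lower bound afforded by the identity path, and by appealing to the empirical verification reported later in the paper.

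The hard part will be making ``expedite training'' a genuinely provable statement rather than a heuristic one, since convergence speed also depends on the loss landscape, the optimizer, and the initialization, none of which admits a clean closed-form description for a full Transformer. My strategy is therefore to isolate the one component that is actually provable — the persistence of the identity path in the Jacobian product, hence a depth-independent lower bound on the back-propagated gradient — and to present the convergence speedup as the standard and direct consequence of it, deferring the quantitative confirmation to the experiments.
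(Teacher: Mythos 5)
Your proposal is correct in spirit but argues through the backward pass, whereas the paper's proof argues entirely through the forward pass; the two routes are genuinely different. The paper's proof starts from the observation that weights are initialized with absolute values close to zero (and the learning rate is small): without residual connections this makes the elements of the sublayer \emph{outputs} small in the early phase of training, resulting in small weight-update steps at each iteration, especially when the number of layers is large; with residual connections the sublayer outputs remain comparable in magnitude to the inputs, so the update steps are larger and training is expedited. This is a statement about activation magnitudes at initialization, and since the gradient of the loss with respect to a weight matrix scales with the activations feeding that matrix, small forward signals directly translate into small updates for \emph{every} layer, not only the early ones. Your argument instead formalizes the dual mechanism: the per-sublayer Jacobian $I + \partial g/\partial x$ and the identity path surviving in the product of Jacobians --- the classical residual-network anti-vanishing-gradient argument. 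What your route buys is a cleaner structural statement (the additive identity in the Jacobian is exactly the derivative of the additive identity in $y = x + g(x)$); what the paper's route buys is that the role of the initialization is explicit, which is precisely the hypothesis your argument is missing to be sound.

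That missing hypothesis is the one genuine gap. Your claim that the identity path guarantees a depth-independent $\Theta(1)$ lower bound on the back-propagated signal is false in general: the correction terms in the expansion of $\prod_k \left( I + J_k \right)$ can cancel the identity path. Taking $J_k = -\tfrac{1}{2} I$ gives a product of norm $2^{-m}$, and $J_k = -I$ annihilates the signal entirely, so the bound you call ``the one component that is actually provable'' is not provable as stated. It \emph{does} hold in the regime the paper works in: near-zero initialization makes each body Jacobian $J_k$ small in norm, so each factor $I + J_k$ is close to the identity and the product stays uniformly bounded away from zero. You should therefore import the paper's small-initialization premise --- the very premise its own proof opens with --- to repair your argument. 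With that fix, your backward-pass account and the paper's forward-pass account are complementary facets of the same phenomenon, and both share the same final heuristic step, which you candidly flag, of converting non-vanishing signals into faster convergence.
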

\begin{proof}
In the phase of training, we usually use random numbers whose absolute values are close to zeros to initialize the weights of Transformer models to avoid divergence. And for the same reason, the initial learning rate used in training is small, too. Without residual connections, this may cause the absolute values of the elements in the output matrices of the sublayers to be small in the early phase of training, resulting in smaller update steps for weights in each training iteration, especially when the number of layers is large. Whereas with residual connections, the absolute values of the elements in the output matrices of the sublayers are larger in general in the early phase of training, resulting in larger update steps for weights, which expedites the training of Transformer models.
\end{proof}

\begin{proposition}{}{}
With the residual connection, sublayer 1 adjusts the row vectors of its input matrix based on the same and prior row vectors of its input matrix.
\end{proposition}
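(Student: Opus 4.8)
The plan is to make the residual structure of sublayer 1 explicit and then read the claim off row by row. First I would denote the input matrix of sublayer 1 by $X \in \mathbb{R}^{t \times d}$ and write the self-attention/Extractor map (ignoring pre-layer normalization and dropout, as agreed in the text) as a function $g:\ \mathbb{R}^{t \times d}\rightarrow\mathbb{R}^{t \times d}$. With the residual connection, the output of sublayer 1 is then $Y = X + g(X)$, and I would immediately express this identity row by row: $Y_i = X_i + g(X)_i$ for each $i = 1, 2, \cdots, t$, where $X_i$ and $Y_i$ denote the $i$-th rows of the input and output matrices. This is the single structural observation on which everything else rests.

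Next I would interpret the two terms on the right-hand side. Because $Y_i = X_i + g(X)_i$, the output row $Y_i$ is exactly the input row $X_i$ (the ``same'' row, carried through unchanged by the residual path) plus the correction $g(X)_i$; thus sublayer 1 does not overwrite its input but \emph{adjusts} it, with $g(X)_i$ playing the role of the adjustment. It then remains only to identify which rows this adjustment depends on, and here I would invoke the dimensionality-reduction proposition already established: the $i$-th row of the attention/Extractor output is computed from the $1$-st, $2$-nd, $\cdots$, $i$-th rows of its input. Hence $g(X)_i$ is a function of $X_1, X_2, \cdots, X_i$ only, i.e. of the same row $X_i$ together with the prior rows $X_1, \cdots, X_{i-1}$. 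Combining this with $Y_i = X_i + g(X)_i$ yields the claim: with the residual connection, sublayer 1 adjusts each row vector $X_i$ by an amount determined by the same and prior row vectors, which completes the argument.

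Since the statement is essentially an interpretation of the identity $Y = X + g(X)$, I do not expect a genuinely hard step. The only point requiring care is to be explicit that the residual connection contributes precisely the term $X_i$, so that the remaining term $g(X)_i$ is correctly read as an adjustment rather than as the full output; the causal dependence on the same and prior rows is then inherited directly from the earlier dimensionality-reduction proposition rather than re-derived here.
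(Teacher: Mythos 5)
Your proposal is correct and takes essentially the same route as the paper's own proof: the paper likewise observes that, with the residual connection, the $i$-th output row of sublayer 1 is the resultant (sum) of the $i$-th input row and the $i$-th row of the self-attention/Extractor output, which in turn is computed from rows $1$ through $i$ of the input. Your version merely makes this explicit with the notation $Y = X + g(X)$ and a citation of the earlier dimensionality-reduction proposition, which the paper restates inline instead.
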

\begin{proof}
The $i$-th row vector in the output matrix of the self-attention or the Extractor is computed using the $1$-st, $2$-nd, $\cdots$, and $i$-th row vectors of its input matrix, where $i=1, 2, \cdots, t$. With the residual connection, the $i$-th row vector in the output matrix of sublayer 1 is the resultant vector of the $i$-th row vector in the input matrix of sublayer 1 and the $i$-th row vector in the output matrix of the self-attention or the Extractor, meaning that the $i$-th row vector in the input matrix of sublayer 1 is adjusted by the $i$-th row vector in the output matrix of the self-attention or the Extractor. 
\end{proof}

\begin{proposition}{}{}
With the residual connection, sublayer 2 adjusts the row vectors of its input matrix based on the same row vectors of its input matrix.
\end{proposition}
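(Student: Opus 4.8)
The plan is to follow the same template as the proof of the preceding proposition on sublayer~1, but to swap in the row-dependency structure already established for sublayer~2. First I would recall, from the proposition asserting that sublayer~2 performs transformation, that the $i$-th row of the output matrix of the FFN is computed using \emph{only} the $i$-th row of its input matrix, for $i=1,2,\cdots,t$. This position-wise locality is the single fact that does all the work here, and it is precisely what distinguishes sublayer~2 from sublayer~1.

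Next I would insert the residual connection. With the residual connection, the $i$-th row vector in the output matrix of sublayer~2 is the resultant vector of the $i$-th row vector in the input matrix of sublayer~2 and the $i$-th row vector in the output matrix of the FFN. Combining this with the locality recalled above, the vector added to the $i$-th input row depends only on that same $i$-th input row, so the $i$-th row of the input matrix is adjusted based on the same row of the input matrix, and on no other row.

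The main point to get right---rather than a genuine obstacle---is the contrast with sublayer~1. In the sublayer~1 case the adjustment of the $i$-th row drew on the $1$-st through $i$-th rows, so both ``the same and prior'' rows appeared; here the FFN introduces no cross-row mixing, so ``prior'' rows drop out and only ``the same'' row survives. I would therefore be careful to phrase the conclusion as ``based on the same row vectors'' (not ``the same and prior''), so that it matches the wording of the proposition and cleanly parallels, yet distinguishes itself from, the sublayer~1 result.
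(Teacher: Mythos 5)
Your proposal is correct and follows essentially the same route as the paper's proof: cite the position-wise locality of the FFN (the $i$-th output row depends only on the $i$-th input row), then observe that with the residual connection the $i$-th row of the sublayer~2 output is the resultant vector of the $i$-th input row and the $i$-th FFN output row, so the adjustment is based only on the same row. Your added remark contrasting this with the ``same and prior'' rows in the sublayer~1 case is a faithful gloss on the paper's intent, not a deviation.
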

\begin{proof}
The $i$-th row vector in the output matrix of the FFN is computed just using the $i$-th row vector of its input matrix, where $i=1, 2, \cdots, t$. With the residual connection, the $i$-th row vector in the output matrix of sublayer 2 is the resultant vector of the $i$-th row vector in the input matrix of sublayer 2 and the $i$-th row vector in the output matrix of the FFN, meaning that the $i$-th row vector in the input matrix of sublayer 2 is adjusted by the $i$-th row vector in the output matrix of the FFN. 
\end{proof}

\begin{proposition}{}{}
The Transformer core maps its input matrix into output matrix by driving the row vectors in its input matrix towards the row vectors in its output matrix layer by layer.
\end{proposition}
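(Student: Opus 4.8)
The plan is to build on two results already established in the excerpt: that the core realizes the composite $f = f^{[m]} \circ f^{[m-1]} \circ \cdots \circ f^{[1]}$, and that, thanks to the residual connections, each sublayer produces its output by \emph{adding} an adjustment to its input. First I would name the sequence of matrices generated as the input flows through the stack. Writing $X^{[0]}$ for the input matrix of the core and $X^{[k]} = f^{[k]}(X^{[k-1]})$ for $k = 1, 2, \cdots, m$, the matrix $X^{[m]}$ is the output of the core. The key observation is that each $f^{[k]}$ is not an arbitrary map but, by the two preceding propositions on residual connections, has the form $X^{[k]} = X^{[k-1]} + \Delta^{[k]}$, where $\Delta^{[k]}$ is the combined adjustment contributed by sublayer 1 and sublayer 2 of the $k$-th layer.

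Next I would pass to the row level, since both sublayers act on a per-row basis. For each row index $i$, the propositions on sublayer 1 and sublayer 2 give $X^{[k]}_i = X^{[k-1]}_i + \Delta^{[k]}_i$, so that the $i$-th row of the core output satisfies $X^{[m]}_i = X^{[0]}_i + \sum_{k=1}^{m} \Delta^{[k]}_i$. This telescoping identity is the heart of the argument: it exhibits the final row vector as the starting row vector plus the accumulation of $m$ incremental adjustments, one per layer. Reading the partial sums, the intermediate vectors $X^{[0]}_i, X^{[1]}_i, \cdots, X^{[m]}_i$ form a discrete path that begins at the input row and ends at the output row, and each layer advances this path by exactly one step. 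This is precisely the sense in which the core drives the row vectors of the input toward those of the output layer by layer.

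I would then connect this to the encoding and transformation readings developed earlier, noting that sublayer 1 moves a row using the same and prior rows while sublayer 2 moves it using only itself, so that every increment $\Delta^{[k]}_i$ is an admissible adjustment rather than a jump to an unrelated vector; this supports viewing the sequence of partial sums as a guided progression rather than a collection of unrelated relocations, and it ties the claim back to Propositions already proved rather than introducing new machinery.

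I expect the main obstacle to be making the phrase ``driving toward'' precise without overclaiming. The residual structure guarantees an additive, step-by-step trajectory, but it does not by itself guarantee that each step strictly decreases the distance to $X^{[m]}_i$ or that the motion is monotone. The honest formulation is therefore that the output is reached as the endpoint of a sequence of per-layer additive adjustments, so ``driving toward'' should be read as the incremental, trajectory-based behaviour captured by the telescoping identity. I would phrase the conclusion accordingly, asserting the layer-by-layer additive progression from input rows to output rows, rather than a convergence or monotonicity property that the hypotheses do not deliver.
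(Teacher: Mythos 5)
Your proposal is correct and follows essentially the same route as the paper: both arguments combine the earlier results that the core is a composition of per-layer maps with Propositions 5 and 6 on residual adjustments, and conclude the layer-by-layer progression of row vectors from input to output. Your version is in fact somewhat more careful than the paper's --- the explicit telescoping identity $X^{[m]}_i = X^{[0]}_i + \sum_{k=1}^{m}\Delta^{[k]}_i$ and your caveat that ``driving toward'' means an incremental trajectory rather than monotone convergence sharpen an argument the paper states only informally.
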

\begin{proof}
According to Lemma 1, the transformer core maps a $t \times d$ matrix into a $t \times d$ matrix. Moreover, according to Proposition 5 and Proposition 6, both sublayers in a Transformer layer adjust the row vectors in its input matrix. Therefore, each layer in the Transformer core drives the row vectors in its input matrix towards the row vectors in its output matrix. Hence, the Transformer core drives the row vectors in its input matrix towards the row vectors in its output matrix, layer by layer.
\end{proof}

In summary, the Transformer first converts a sequence of token indices (as well as their positions) into a matrix via embedding. Then, the Transformer core maps this matrix into another matrix  by driving the row vectors in the input matrix towards the row vectors in its output matrix layer by layer, as  illustrated in Fig.~\ref{fig_trans}. The row vectors in the output matrix represent the predictions that the Transformer makes. Finally, the softmax regression in the Transformer maps each row vector in the output matrix into probabilities for choosing the next token in a vocabulary. Please note that in the phase of inference only the last row vector in the output matrix is used whereas in the phase of training all the row vectors may be used.

The self-attention sublayer in the Transformer provides a way to reduce multiple $d$-vectors (i.e., multiple row vectors in the input $t \times d$ matrix) to one $d$-vector. It generates dynamic weights (by the self-attention mechanism) to weight the multiple $d$-vectors, resulting in much diversified outputs or ``codes'', which is its advantage. The Extractor sublayer, on the other hand, uses static weights to weight the multiple $d$-vectors (i.e., the row vectors in the virtual input $l \times d$ matrix) and employs dynamic element-wise multiplications (as what the ``adjustment'' part does) to diversify its outputs or ``codes''. Thus, the Extractor is able to reduce the computational complexity of sublayer 1 while maintaining the performance of the Transformer. Moreover, another advantage of the Extractor sublayer is that it does not require positional embeddings since it uses position-relevant static weights to weight the multiple $d$-vectors or the multiple $d$-vectors are computed using position-relevant weights. As an example, a type of the Extractor called the HE (as proposed in~\cite{chen2023attention}) is able to outperform the multi-head self-attention with fewer arithmetic operations and the same number of trainable parameters.

\section{Improvement of the Extractor}

\begin{figure}[!t]
  \centering
  \includegraphics[width=0.8\columnwidth]{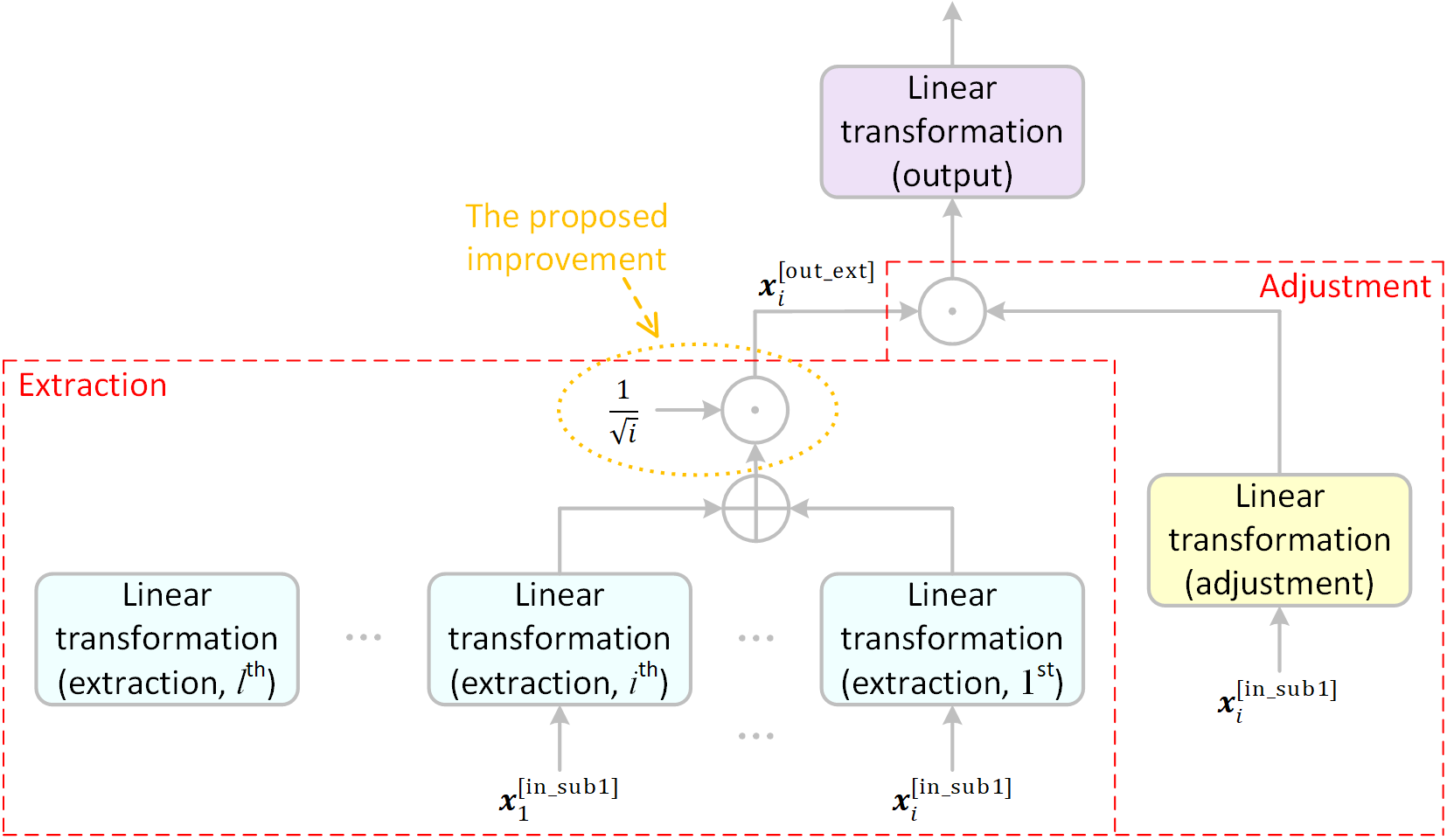} 
  \caption{The proposed iSHE.}
  \label{fig_ishe}
\end{figure}

In~\cite{chen2023attention}, a type of the Extractor called the SHE is proposed to replace the multi-head self-attention in a drop-in fashion. Although it outperforms the self-attention, we find that its performance can be further improved. In this section, we propose an improvement to the SHE.

The Fig. 3 in~\cite{chen2023attention} illustrates the SHE. An imperfection of the SHE is that the SHE does not scale the output of its ``extraction'' part (i.e., the resultant vector of a number of vectors) in accordance with the length of the sequence (i.e., the number of vectors). This results in that the variances of the elements in the output matrix of the ``extraction'' part vary with the length of the sequence, causing the variances of the elements in the output matrix of sublayer 1 to vary with the length of the sequence, which deteriorates the performance of the Transformer.

To address this issue, we propose to standardize the variances of the elements in the output matrix of the ``extraction'' part by multiplying $\frac{1}{\sqrt{i}}$, as shown in Fig.~\ref{fig_ishe} and Eq.~(\ref{eq_out_ext}). 
\begin{equation} 
  \label{eq_out_ext}
  \boldsymbol{x}_i^{[\mathrm{out\_ext}]}= \frac{1}{\sqrt{i}} \sum_{j=1}^i \boldsymbol{x}_j^{[\mathrm{in\_sub1}]} \boldsymbol{W}_{i-j+1}^{[\mathrm{ext}]}
\end{equation} 
where $\boldsymbol{x}_i^{\left[\mathrm{out\_ext}\right]}$ is the $i$-th row vector in the output matrix $\boldsymbol{X}^{\left[\mathrm{out\_ext}\right]}$, $\boldsymbol{X}^{[\mathrm{out\_ext}]}\in\mathbb{R}^{t \times d}$, $\boldsymbol{x}_i^{\left[\mathrm{out\_ext}\right]}\in\mathbb{R}^{1\times d}$, $\boldsymbol{x}_j^{\left[\mathrm{in\_sub1}\right]}$ is the $j$-th row vector in the input matrix $\boldsymbol{X}^{\left[\mathrm{in\_sub1}\right]}$, $\boldsymbol{X}^{[\mathrm{in\_sub1}]}\in\mathbb{R}^{t \times d}$, $\boldsymbol{x}_j^{\left[\mathrm{in\_sub1}\right]}\in\mathbb{R}^{1\times d}$,  $\boldsymbol{W}_1^{[\mathrm{ext}]}, \boldsymbol{W}_2^{[\mathrm{ext}]},\cdots,\boldsymbol{W}_l^{[\mathrm{ext}]}$ are weight matrices, $\boldsymbol{W}_1^{[\mathrm{ext}]}, \boldsymbol{W}_2^{[\mathrm{ext}]},\cdots,\boldsymbol{W}_l^{[\mathrm{ext}]}\in\mathbb{R}^{d\times d}$, and $i=1,2,\cdots,t$. 

We call this improved version of SHE iSHE in this paper.

\section{Experiments}

In this section, we first experimentally validate our interpretation of the Transformer. Then, we evaluate the performance of the proposed iSHE.

\subsection{Validation of the Interpretation of the Transformer}

Since Proposition 7 interprets the core idea of the Transformer, we focus on the validation of Proposition 7 in the following experiment. In order to faithfully plot $d$-vectors in the Euclidean plane, we let $d=2$. However, $d$ is one of the major hyperparameters that decide the capacity of a Transformer model. Thus, it is reasonable to employ a small vocabulary when $d$ is small. On the other hand, in order to train a Transformer model even with few layers and a very small vocabulary, lots of training examples are required, meaning that the training dataset should be large enough. So we generate a dataset with over 220,000 training examples and only three tokens in the vocabulary. To be exact, the three tokens are ``0'', ``1'', and ``;'', respectively. This dataset simply contains $2^{14}$ binary numbers (ranging from $0$ to $2^{14} - 1$) separated by semicolons.

With this dataset, two Transformer models are trained. One model employs the 1-head self-attention sublayer, whereas the other employs the SHE sublayer. The hyperparameters and settings for training these Transformer models are listed in Table~\ref{table_param_val}. All the biases in these models are disabled, since in general they do not contribute to improving the performance of the Transformer. The parameter-free version of pre-layer normalization is used. And dropout is not used. Fig.~\ref{fig_vectors_att1} and Fig.~\ref{fig_vectors_she} show the output row vectors of all the sublayers in the Transformer core when the Transformer core inputs an row vector.
It can be seen that as the starting point the input row vector is driven to the end point (i.e., the output of sublayer 2 in layer 4) step by step by every sublayer in the Transformer core, no matter what type of sublayer 1 is employed. This experiment validates Proposition 7.

\begin{table*}[t]
\centering
\begin{tabular}{l|l}
    \toprule
    Hyperparameter or setting & Value \\
   \midrule
    Size of the vocabulary & 3 \\
    Length of the context window ($l$) & 64 \\
    ``Dimension'' of the model ($d$) & 2 \\
    Number of the nodes in the hidden layer of the FFN & 8 \\
    Number of layers ($m$) & 4 \\
    Batch size & 64 \\
    Number of batches for training & 3500 (0.977 epochs) \\
    Optimizer & AdamW ($\gamma=0.001$, $\beta_1=0.9$, $\beta_2=0.999$) \\
    Learning rate & 0.001 \\
    Standard deviation for initializing weights & 0.01 \\
    \bottomrule
\end{tabular}
\caption{Hyperparameters and settings for the validation.}
\label{table_param_val}
\end{table*}

\begin{figure}[!t]
  \centering
  \includegraphics[width=0.75\columnwidth]{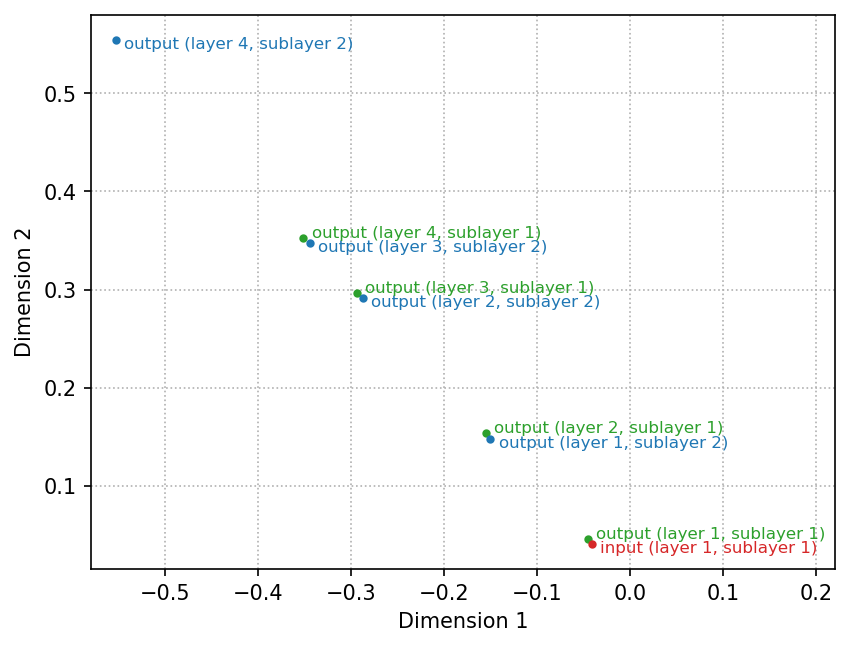} 
  \caption{The input and output row vectors of the sublayers in the Transformer core with the 1-head self-attention sublayer.}
  \label{fig_vectors_att1}
\end{figure}

\begin{figure}[!t]
  \centering
  \includegraphics[width=0.75\columnwidth]{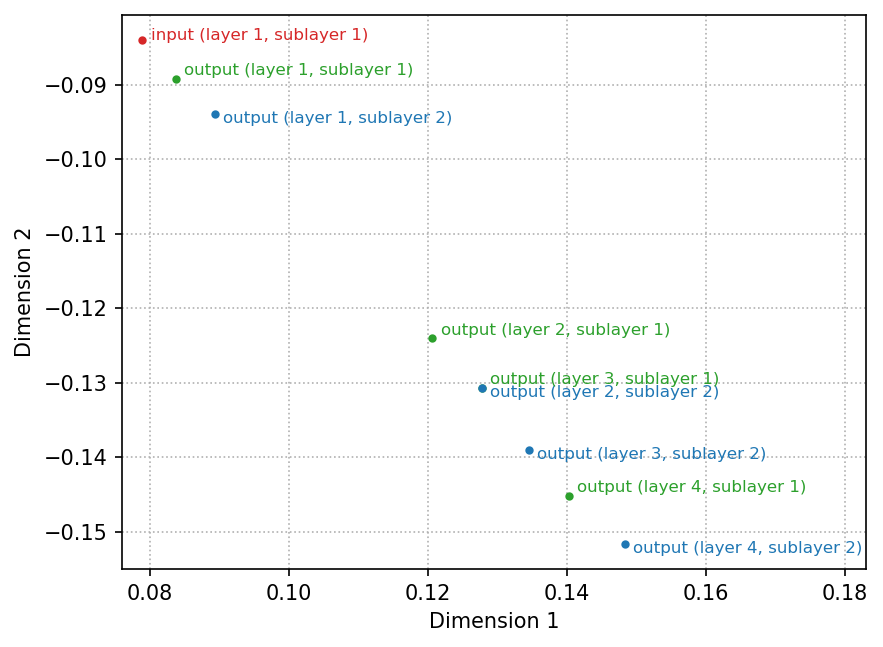} 
  \caption{The input and output row vectors of the sublayers in the Transformer core with the SHE sublayer.}
  \label{fig_vectors_she}
\end{figure}

\subsection{Evaluation of the Performance of the proposed iSHE}

In order to evaluate the performance of the proposed iSHE, nine Transformer models with either the self-attention sublayer (with 1, 2, 4, 8, 16, 32, and 64 heads, respectively) or the Extractor sublayer (the SHE or the iSHE) are trained using exactly the same dataset that is used in~\cite{chen2023attention} for text generation. This dataset is composed of the top 100 books on English children's literature available at gutenberg.org, a library of free ebooks. The raw text of the books is tokenized using the Hugging Face BPE (byte-pair encoding) tokenizer with a vocabulary size of 5000, resulting in a total of 8.4M tokens.

The models are implemented and trained using the PyTorch 2.1 framework on an NVIDIA GeForce RTX 4050 GPU (graphics processing unit). The hyperparameters and settings for this experiment are listed in Table~\ref{table_param_eva}. As what we did in the previous experiment, all the biases in these models are disabled and dropout is not used. And the parameter-free version of pre-layer normalization is used. The weights are initialized randomly following a normal distribution with a mean of zero and a standard deviation of 0.01. All the models are initialized and trained with the same random seed.
\begin{table*}[t]
\centering
\begin{tabular}{l|l}
    \toprule
    Hyperparameter or setting & Value \\
   \midrule
    Size of the vocabulary & 5000 \\
    Length of the context window ($l$) & 128 \\
    ``Dimension'' of the model ($d$) & 128 \\
    Number of the nodes in the hidden layer of the FFN & 512 \\
    Number of layers ($m$) & 12 \\
    Batch size & 64 \\
    Number of batches for training & 30000 (0.228 epochs) \\
    Optimizer & AdamW ($\gamma=0.001$, $\beta_1=0.9$, $\beta_2=0.999$) \\
    Learning rate & 0.001 \\
    Standard deviation for initializing weights & 0.01 \\
    \bottomrule
\end{tabular}
\caption{Hyperparameters and settings for the evaluation.}
\label{table_param_eva}
\end{table*}

\begin{figure}[!t]
  \centering
  \includegraphics[width=0.75\columnwidth]{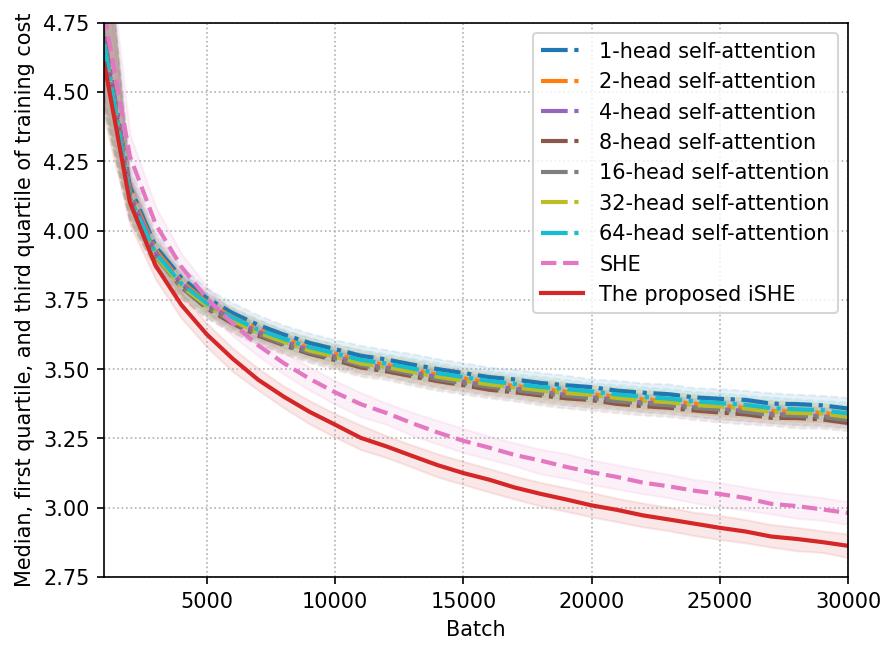} 
  \caption{Median, first quartile, and third quartile of the training costs of the models with different types of sublayer 1.}
  \label{fig_eval}
\end{figure}
We use training cost (i.e., the average training loss over a batch) as the evaluation matric since training cost equals perplexity in this task. Perplexity measures how well a probability model predicts. The lower the perplexity, the better the model predicts.
Fig.~\ref{fig_eval} shows the median, the first quartile, and the third quartile of the training costs for every non-overlapping 1000 batches. This figure evidently indicates that the model with the proposed iSHE sublayer outperforms all the other models. 
Please note that such a performance gain does not cost a single extra trainable parameter. Furthermore, both the iSHE and the SHE do not require the positional embedding that is required by the self-attention, saving $l \cdot d$ trainable parameters compared with the self-attention in this case.

\section{Related Work}

There are quite a few works related to interpreting the Transformer. Most of them focus on interpreting the attention matrix, specific network components, model parameters, or hidden representations.

The most related work we are aware of is~\cite{molina2023traveling}. This work connects the ideas proposed in previous works and interprets the Transformer based on the geometric interpretation of layer normalization. However, layer normalization is not an indispensable ingredient to the Transformer, as the Transformer can work without layer normalizations. In contrast, our interpretations do not rely on layer normalization.

\section{Conclusion}

In this paper, the Transformer architecture is comprehensively interpreted in plain words. Although we focus on the decoder of the Transformer, the interpretations can also be applied to the encoder. 

Moreover, a type of the Extractor, namely the SHE, is improved without introducing additional trainable parameters. The improved SHE, or the iSHE, achieves a better performance by simply introducing a multiplier factor to the output of the ``extraction'' part. Therefore, we strongly recommend to replace the SHE with the proposed iSHE.

\bibliographystyle{unsrtnat}
\bibliography{references}

\end{document}